%% file: main.tex
\documentclass[runningheads]{llncs}

\usepackage[T1]{fontenc}
\usepackage{graphicx}
\input{notations}

\begin{document}
\title{
On the Generalization Bounds of Symbolic Regression with Genetic Programming
}
\titlerunning{
On the Generalization Bounds of Symbolic Regression with GP
}
%
\author{
Masahiro Nomura\inst{1}
\and
Ryoki Hamano\inst{2}
\and
Isao Ono\inst{1}
}
\authorrunning{Nomura et al.}
%
\institute{
Institute of Science Tokyo, Yokohama, Japan\\
\email{\{nomura,isao\}@comp.isct.ac.jp}
\and
CyberAgent, Shibuya, Japan\\
\email{hamano\_ryoki\_xa@cyberagent.co.jp}
}
\maketitle              

\input{manuscripts/00_abst}
\input{manuscripts/01_intro}

\input{manuscripts/02_setup}

\input{manuscripts/03_result}

\input{manuscripts/04_proof}

\input{manuscripts/05_discussion}
\input{manuscripts/06_conclusion}

\section*{Acknowledgement}
We are grateful to the anonymous reviewers for their helpful comments and suggestions.
We are also grateful to Peter Rockett for valuable feedback.

\bibliographystyle{splncs04}
\bibliography{ref}

\end{document}

%% file: notations.tex
\usepackage[utf8]{inputenc}
\usepackage[T1]{fontenc}
\usepackage{lmodern}
\usepackage{microtype}

\usepackage{amsmath, amssymb, mathtools, bm}
\usepackage{amscd}

\usepackage{bbm}
\usepackage{enumitem}
\usepackage{booktabs}
\usepackage{algorithm}
\usepackage{algorithmic}
\usepackage{hyperref}
\usepackage[nameinlink]{cleveref}
\usepackage{cite}

\hypersetup{
  colorlinks=true,
  linkcolor=blue,
  citecolor=blue,
  urlcolor=blue
}

\renewcommand{\leq}{\leqslant} 
\renewcommand{\ge}{\geqslant} 
\renewcommand{\le}{\leqslant} 

\usepackage{tikz}
\usepackage{forest}

\newcommand{\R}{\mathbb{R}}
\newcommand{\N}{\mathbb{N}}
\newcommand{\E}{\mathbb{E}}

\newcommand{\norm}[1]{\left\lVert #1 \right\rVert}
\newcommand{\abs}[1]{\left\lvert #1 \right\rvert}
\newcommand{\cF}{\mathcal{F}}
\newcommand{\cG}{\mathcal{G}}
\newcommand{\cT}{\mathcal{T}}
\newcommand{\cO}{\mathcal{O}}
\newcommand{\cX}{\mathcal{X}}
\newcommand{\cY}{\mathcal{Y}}
\newcommand{\cD}{\mathcal{D}}
\newcommand{\cL}{\mathcal{L}}
\newcommand{\cC}{\mathcal{C}}

\newcommand{\Rad}{\mathfrak{R}}
\newcommand{\what}{\widehat}
\newcommand{\eps}{\varepsilon}

%% file: manuscripts/00_abst.tex
\begin{abstract}
Symbolic regression (SR) with genetic programming (GP) aims to discover interpretable mathematical expressions directly from data.
Despite its strong empirical success, the theoretical understanding of why GP-based SR generalizes beyond the training data remains limited.
In this work, we provide a learning-theoretic analysis of SR models represented as expression trees.
We derive a generalization bound for GP-style SR under constraints on tree size, depth, and learnable constants.
Our result decomposes the generalization gap into two interpretable components:
a structure-selection term, reflecting the combinatorial complexity of choosing an expression-tree structure, and a constant-fitting term, capturing the complexity of optimizing numerical constants within a fixed structure.
This decomposition provides a theoretical perspective on several widely used practices in GP, including parsimony pressure, depth limits, numerically stable operators, and interval arithmetic.
In particular, our analysis shows how structural restrictions reduce hypothesis-class growth while stability mechanisms control the sensitivity of predictions to parameter perturbations.
By linking these practical design choices to explicit complexity terms in the generalization bound, our work offers a principled explanation for commonly observed empirical behaviors in GP-based SR and contributes towards a more rigorous understanding of its generalization properties.
\keywords{Symbolic regression \and Genetic programming \and Generalization bound.}
\end{abstract}

%% file: manuscripts/01_intro.tex
\section{Introduction}

Symbolic regression (SR)~\cite{kronberger2024symbolic} aims to discover analytical expressions that describe relationships between input variables and target outputs directly from data. 
Unlike conventional regression methods, which assume a predefined model class and then optimize parameters within that class, SR simultaneously searches for both the functional form of the predictor and its numerical coefficients.
Because the output of SR is an explicit mathematical expression, the resulting models are often interpretable and can sometimes reveal underlying mechanisms in the data.
For this reason, SR has long been regarded as a promising approach for interpretable modeling and scientific discovery, where compact analytical expressions can provide insight in addition to predictive accuracy~\cite{udrescu2020ai,brunton2016discovering,wang2019symbolic}.

Genetic programming (GP)~\cite{koza1992genetic,poli2008field} is one of the most widely used approaches to SR.
In GP-based SR, candidate models are typically represented as expression trees whose internal nodes correspond to operators and whose leaves correspond to variables or constants.
Evolutionary search is then used to explore the space of symbolic structures through operations such as mutation, crossover, and selection.
In addition, the numerical constants within a given structure are often refined using dedicated continuous optimization techniques, such as nonlinear least squares~\cite{kommenda2020parameter}.
This hybrid nature---combining discrete structure search with continuous parameter optimization---makes GP particularly attractive for SR, as it enables open-ended exploration of functional forms without requiring prior commitment to a specific model class.
However, the same flexibility that allows GP to discover concise symbolic models also raises a fundamental question: why should a model found by GP generalize beyond the training data?
In learning-theoretic terms, this question concerns how well a model that fits the training data will perform on unseen data.
This is typically formalized through the notion of a generalization gap, the difference between training error and expected prediction error, and the goal of deriving bounds that hold uniformly over a class of models.
In practice, GP-based SR is known to be sensitive to design choices such as tree-size limits, maximum depth, constant optimization strategies, and the use of numerically stable operators, many of which are motivated by the need to control overfitting.

These observations naturally lead to the question of how such design choices influence generalization from a theoretical perspective.
The GP literature has proposed various techniques intended to improve the generalization ability of SR models, including parsimony pressure, depth limits, and operator design aimed at improving numerical stability~\cite{poli2008parsimony,luke2002lexicographic,keijzer2003improving}.
These approaches are widely used and often effective in practice.
Nevertheless, their effects are typically justified empirically rather than through a unified theoretical framework.
One reason is that SR induces a hypothesis class with a hybrid structure, combining discrete structural choices with continuous parameter optimization, which makes standard learning-theoretic tools difficult to apply directly.
As a result, although many practical heuristics for controlling overfitting in GP-based SR have been developed, there remains limited learning-theoretic understanding of how these two aspects jointly influence generalization.
Understanding this interplay is essential for developing principled methods for controlling model complexity in SR.

In this paper, we present a learning-theoretic analysis of SR models represented by expression trees.
To the best of our knowledge, this is among the first works to derive an explicit generalization bound for GP-style SR models.
Our main result provides a bound that decomposes the generalization gap into two interpretable components corresponding to structure selection and constant fitting.
Specifically, the bound consists of a term that scales with the sensitivity of the model to its learnable constants and a term that depends on the logarithm of the number of admissible tree structures, reflecting the combinatorial complexity of structure selection.
This decomposition offers a theoretical lens for understanding how common mechanisms used in GP-based SR influence generalization.
In particular, structural control techniques such as parsimony pressure or depth limits restrict the space of admissible tree structures, while numerical-stability mechanisms such as interval-based restrictions help control the sensitivity of predictions to parameter perturbations.
The resulting separation is reminiscent of the function- and parameter-complexity penalties used in MDL and Bayesian model-selection approaches to SR~\cite{bartlett2023exhaustive,bartlett2023priors,desmond2026exhaustive}, but here it is obtained top-down from a generalization analysis rather than derived from coding-length or marginal-likelihood principles.

The remainder of this paper is organized as follows.
Section~\ref{sec:setup} introduces the SR setting and the expression-tree representation considered in this paper.
Section~\ref{sec:result} presents our main theorem, the generalization bound for GP-based SR.
Section~\ref{sec:proof} proves the main theorem using learning-theoretic analysis.
Section~\ref{sec:connect-empirics} discusses the connection between our theoretical result and empirical successes in GP-based SR.
Finally, Section~\ref{sec:conclusion} concludes the paper.

%% file: manuscripts/02_setup.tex
\section{Setup}
\label{sec:setup}

\subsection{Data, Risks, and Loss}
For SR, we consider a supervised regression setting.
Let $(x,y)\sim \cD$ with $x\in\cX\subseteq \R^d$ and $y\in\cY\subseteq \R$.
Given an i.i.d.\ sample $S=\{(x_i,y_i)\}_{i=1}^m$, define the population risk and the empirical risk:
\begin{align}
    L(f)=\E[\ell(f(x),y)] \enspace,
    \quad \what L_S(f)=\frac{1}{m}\sum_{i=1}^m \ell(f(x_i),y_i) \enspace ,
\end{align}
where $\ell$ is a loss function measuring prediction error.

\begin{definition}[Bounded Lipschitz loss]
We assume $\ell:\R\times\cY\to[0,1]$ is 1-Lipschitz in its first argument:
\begin{align}
    \abs{\ell(a,y)-\ell(a',y)}\le \abs{a-a'} \enspace,
    \quad \forall a,a'\in\R \enspace,\ \forall y\in\cY \enspace .
\end{align}
\end{definition}
This assumption is standard in statistical learning theory~\cite{mohri2018foundations} and allows generalization bounds to be expressed in terms of the complexity of the function class.
In practice, common losses such as mean squared error (MSE) can be accommodated through appropriate normalization or bounded-domain restrictions.

\subsection{Expression-tree Structures}
Our predictors are represented as expression trees, as commonly used in GP-based SR.
Internal nodes represent operators and leaves represent input variables or constants.

Let the operator vocabulary be partitioned by arity (number of arguments) up to $a_{\max}$:
\begin{align}
    \cO = \bigcup_{k=1}^{a_{\max}} \cO_k \enspace,
    \quad M_k=\abs{\cO_k} \enspace .
\end{align}
Typical examples include $\cO_1 = \{ \sin, \cos, \exp \}$ and $\cO_2 = \{ +, -, \times, \div \}$, where unary operators belong to $\cO_1$ and binary operators belong to $\cO_2$.

Leaves of the tree represent input variables and constants.
Let
\begin{align}
    \cL_0=\{x_1,\dots,x_d\} \cup \cC_{\mathrm{fix}} \enspace,
    \quad N_0=\abs{\cL_0} \enspace ,
\end{align}
where $\cC_{\mathrm{fix}}$ is an optional set of fixed constants (for example, $\{1, \pi \}$).
In addition, we allow a special leaf symbol $c$, representing a learnable constant.
During model fitting, this constant is optimized using the data.

\begin{definition}[Expression-tree structure]
An expression-tree structure $T$ is a rooted ordered tree with the following properties:
\begin{itemize}
    \item Each internal node has arity $k$ and is labeled by an operator in $\cO_k$.
    \item Each leaf is labeled by a symbol in $\cL_0\cup\{c\}$.
\end{itemize}
Let $p(T)$ be the number of leaves labeled by the learnable constant symbol $c$.
We fix a deterministic ordering of these leaves (e.g., depth-first order).
For a parameter vector $\theta\in\R^{p(T)}$, the function represented by the tree is $f_{T,\theta}:\cX\to\R$ obtained by assigning the value $\theta_j$ to the $j$-th constant leaf.
\end{definition}
Thus, each tree structure defines a family of functions parameterized by the constants $\theta$.

\subsection{Budgets and the Union Class}
To control model complexity, we restrict both the size and depth of expression trees.
Let $\mathrm{size}(T)$ be the number of nodes and $\mathrm{depth}(T)$ the maximum root-to-leaf path length measured in edges (with the root at depth $0$).

\begin{definition}[Budgets and induced classes]
Fix a size budget $s\in\N$ and a depth budget $D\in\N$ with $D \ge 1$.
We define the set of admissible tree structures
\begin{align}
    \cT_{s,D} = \{T:\ \mathrm{size}(T)\le s,\ \mathrm{depth}(T)\le D\} \enspace .
\end{align}
For a parameter radius $R>0$ and dimension $p\in\N$, define
$\Theta_p(R) = \{\theta\in\R^p: \norm{\theta}_2\le R\}$.
For each tree $T\in\cT_{s,D}$, we consider the class of functions
\begin{align}
    \cF_T(R) = \{f_{T,\theta}:\ \theta\in\Theta_{p(T)}(R)\} \enspace .
\end{align}
Finally, the overall function class allowed under the budgets is the union
\begin{align}
    \cF_{s,D}(R) = \bigcup_{T\in\cT_{s,D}} \cF_T(R) \enspace .
\end{align}
This class contains all expression-tree predictors whose structure satisfies the size and depth constraints and whose learnable constants lie within the radius-$R$ ball.
\end{definition}

\subsection{Assumptions}
To obtain generalization guarantees, we impose two standard conditions on the functions represented by expression trees.
Intuitively, these assumptions ensure that predictions remain bounded and that small changes in the learnable constants lead to controlled changes in the output.

We assume that predictions produced by budgeted trees are uniformly bounded.
\begin{definition}[Bounded predictions]
There exists $B>0$ such that for all $T\in\cT_{s,D}$, all $\theta\in\Theta_{p(T)}(R)$, and all $x\in\cX$,
\begin{align}
    \abs{f_{T,\theta}(x)}\le B \enspace .
\end{align}
\end{definition}
This condition prevents expressions from producing arbitrarily large values.
In practice, boundedness can be enforced by restricting the input domain, limiting parameter magnitudes, or using numerically stable operators.

We also assume that the output of the tree varies smoothly with respect to the learnable constants.
\begin{definition}[Coefficient-Lipschitz sensitivity]
There exists a quantity $G > 0$ such that for all $T \in \cT_{s,D}$, all $\theta,\theta'\in\Theta_{p(T)}(R)$, and all $x\in\cX$,
\begin{align}
    \abs{f_{T,\theta}(x)-f_{T,\theta'}(x)}\le G \,\norm{\theta-\theta'}_2 \enspace .
\end{align}
\end{definition}

The above condition is satisfied whenever the gradient with respect to the parameters is uniformly bounded: $\norm{\nabla_\theta f_{T,\theta}(x)}_2 \leq G$.
In SR practice, the constant $G$ typically depends on the stability of operators, restrictions on the input domain, and scaling of the expressions.

%% file: manuscripts/03_result.tex
\section{Main Result}
\label{sec:result}

The following theorem provides a unified generalization bound for SR models represented by expression trees.
The result explicitly decomposes the generalization error into two sources: structure selection and constant fitting.

\begin{theorem}[Unified generalization bound for GP-SR]
\label{thm:main}
Assume the bounded Lipschitz loss, bounded predictions, and coefficient-Lipschitz sensitivity assumptions.
Fix a size budget $s$, a depth budget $D$, and a parameter radius $R > 0$.
Then for any confidence level $\delta\in(0,1)$, with probability at least $1-\delta$ over the samples $S \sim \cD^m$, the following inequality holds simultaneously for all $f \in \cF_{s,D}(R)$:
\begin{align}
L(f)\
\le
\what L_S(f)
+ C_1\, R\, G \sqrt{\frac{s}{m}}
+ C_2\,B\sqrt{\frac{\log\abs{\cT_{s,D}}}{m}}
+ C_3\sqrt{\frac{\log(1/\delta)}{m}},
\label{eq:gen_bound}
\end{align}
where $C_1, C_2, C_3 > 0$ are absolute constants.

Moreover, in the unary/binary operator regime (arity at most 2), $\log|\cT_{s,D}|$ admits a depth-aware bound with exponential base
\begin{align}
\rho_{D} \;=\; 4\cos^2\!\left(\frac{\pi}{D+2} \right)\; < \; 4,
\end{align}
so that $\log|\cT_{s,D}|=O\!\left(\log s + s\log(\rho_{D} (M_1 + M_2) (N_0+1))\right)$ up to lower-order terms.
\end{theorem}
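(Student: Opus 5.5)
We will follow the standard Rademacher-complexity route. Since $\ell\in[0,1]$, the usual symmetrization plus McDiarmid argument gives, with probability at least $1-\delta$, simultaneously for all $f\in\cF_{s,D}(R)$,
\begin{align}
L(f)\le \what L_S(f) + 2\,\what{\Rad}_S(\ell\circ\cF_{s,D}(R)) + 3\sqrt{\frac{\log(2/\delta)}{2m}} \enspace .
\end{align}
The last term produces the $C_3$ term. Because $\ell$ is $1$-Lipschitz in its first argument, Talagrand's contraction lemma (applied pointwise in $y_i$) lets us drop the loss, so it remains to bound $\what{\Rad}_S(\cF_{s,D}(R))$.

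The key step is to control the Rademacher complexity of a finite union $\bigcup_{T}\cF_T(R)$. We first bound each piece and then pay $\log|\cT_{s,D}|$ for the union.

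\emph{Per-tree bound.} For fixed $T$ with $p=p(T)\le s$, coefficient-Lipschitz sensitivity shows that an $\eps$-cover of $\Theta_p(R)$ in $\ell_2$ of size $(3R/\eps)^p$ induces an $G\eps$-cover of $\cF_T(R)$ in sup norm. Since predictions are bounded by $B$, Dudley's entropy integral gives
\begin{align}
\E_\sigma\sup_{\theta}\frac1m\sum_i\sigma_i f_{T,\theta}(x_i)\lesssim \int_0^{B}\sqrt{\frac{p\log(3RG/u)}{m}}\,du \enspace .
\end{align}
To reach the clean form $RG\sqrt{s/m}$ without log factors, we use a chaining/Gaussian-comparison style argument: the process $\theta\mapsto\sum_i\sigma_i f_{T,\theta}(x_i)$ has subgaussian increments with metric $G\sqrt{m}\norm{\theta-\theta'}_2$. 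Hence its expected supremum is at most a constant times $G\sqrt m$ times the Gaussian width of $\Theta_p(R)$, which is at most $R\sqrt p\le R\sqrt s$. This is Talagrand's majorizing-measure comparison for subgaussian processes, which we take as a black box.

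\emph{Union over structures.} Write $Z_T=\sup_{\theta}\frac1m\sum_i\sigma_i f_{T,\theta}(x_i)$. Each $Z_T$ is a function of $\sigma$ with bounded differences $2B/m$, so it is $B/\sqrt m$-subgaussian around its mean by McDiarmid. Then
\begin{align}
\E\max_T Z_T\le\max_T\E Z_T+cB\sqrt{\frac{2\log|\cT_{s,D}|}{m}} \enspace ,
\end{align}
which yields $C_1RG\sqrt{s/m}+C_2B\sqrt{\log|\cT_{s,D}|/m}$.

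\emph{Counting structures.} A structure is a shape (an ordered tree with arities in $\{1,2\}$, size $n\le s$, depth $\le D$) decorated with labels. There are at most $M_1+M_2$ choices per internal node and $N_0+1$ per leaf, so $|\cT_{s,D}|\le\sum_{n\le s}\#\mathrm{shapes}_{n,D}\,\bigl((M_1+M_2)(N_0+1)\bigr)^n$. For the shapes we use the depth-bounded unary–binary (Motzkin-type) tree counting. Since binary trees dominate, we pass to plane trees and use the classical result that height-bounded plane trees are counted by a transfer matrix. That matrix is the adjacency matrix of a path on about $D+1$ vertices, whose largest eigenvalue squared is $4\cos^2(\pi/(D+2))=\rho_D$. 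This gives $\#\mathrm{shapes}_{n,D}\le \mathrm{poly}(n)\,\rho_D^{\,n}$. Taking logarithms and summing over $n\le s$ (which contributes the $\log s$ term) yields the claim.

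We expect this step to be the main obstacle. The hard part is getting the exact base $\rho_D$ with an encoding that correctly maps unary–binary shapes to height-bounded paths, for example via a Łukasiewicz/Dyck-path bijection, so that depth becomes path height. The fallback is the cruder bound $4^n$ from Catalan numbers, which already gives the $O(s\log(\cdot))$ form.
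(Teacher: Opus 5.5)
Your proposal is correct and has the same skeleton as the paper's proof. Both use a standard Rademacher bound plus contraction, then the finite-union step via bounded differences and the sub-Gaussian maximal inequality (exactly Lemma~\ref{lem:finite-union}), then a per-structure bound of order $RG\sqrt{p/m}$, and finally a shapes-times-labels count summed over $n\le s$. You differ from the paper in two sub-steps.

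For the per-structure term, you invoke Talagrand's comparison inequality because you expected Dudley's integral to leave a logarithmic factor. It does not. The $d_S$-diameter of $\cF_T(R)$ is at most $2RG$, so the integral should be truncated there rather than at $B$; your integrand $\sqrt{\log(3RG/u)}$ is not even defined for $u>3RG$. Truncating gives $\int_0^{2RG}\sqrt{p\log(1+2RG/\eps)}\,d\eps\le\sqrt{p}\int_0^{2RG}\sqrt{2RG/\eps}\,d\eps=4RG\sqrt{p}$, which is how the paper obtains $C_{\mathrm{par}}=4C$ in Theorem~\ref{thm:fixed-structure}. Your majorizing-measure route is valid but rests on a much deeper theorem. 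What it buys is generality: it gives $G\,w(\Theta)/\sqrt{m}$, with $w$ the Gaussian width, for any constraint set $\Theta$ on the constants, not only Euclidean balls.

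For the shape count, the paper quotes the de Bruijn--Knuth--Rice asymptotic and absorbs the prefactor into an unspecified $c_D$. Your transfer-matrix argument is non-asymptotic, and the step you flagged as the main obstacle is immediate:
\begin{itemize}
\item Unary--binary ordered trees are a subset of plane trees. This inclusion, not ``binary trees dominate'', is the right justification.
\item The contour bijection sends a plane tree with $n$ nodes and depth at most $D$ (in edges) to a closed walk of length $2(n-1)$ at vertex $0$ on the path graph with vertices $0,\dots,D$.
\item Then $(A^{2(n-1)})_{00}\le\lambda_{\max}(A)^{2(n-1)}=\rho_D^{\,n-1}$.
\end{itemize}
This gives Corollary~\ref{cor:bounded-height-base} with $c_D=1$ and no polynomial factor, and no special path encoding is needed.

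One minor fix: your starting bound carries $\sqrt{\log(2/\delta)}$, which cannot be absorbed into $C_3\sqrt{\log(1/\delta)/m}$ as $\delta\to 1$. Since your bound on $\Rad_S(\cF_{s,D}(R))$ holds for every sample, use the one-sided expected-Rademacher version instead. Its confidence term is $\sqrt{\log(1/\delta)/(2m)}$.
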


The first complexity term, $R\, G \sqrt{s/m}$, reflects the cost of fitting learnable constants within a fixed tree structure, while the second term, $B\sqrt{\log|\cT_{s,D}|/m}$, captures the combinatorial cost of selecting a tree structure.
The depth budget affects the latter term through $\rho_{D}$: smaller depth reduces the exponential growth rate of admissible structures.

%% file: manuscripts/04_proof.tex
\section{Proof}
\label{sec:proof}

This section proves the generalization bound stated in Theorem~\ref{thm:main}.
The argument proceeds in three main steps.
First, in Section~\ref{sec:finite_ub_rad}, we control the Rademacher complexity of a finite union of function classes, which corresponds to selecting a tree structure among the candidates in $\cT_{s,D}$.
Second, in Section~\ref{sec:constant-complexity}, we bound the Rademacher complexity of functions associated with a fixed tree structure, capturing the cost of fitting the learnable constants.
Third, in Section~\ref{sec:structure-complexity}, we bound the number of admissible tree structures, which determines the structure-selection term in the generalization bound.
Finally, combining these results yields the main theorem (Section~\ref{sec:combining_proof}).

\subsection{Finite Union Rademacher Complexity}
\label{sec:finite_ub_rad}

Our function class $\cF_{s,D}(R)$ is defined as a union over tree structures: $\cF_{s,D}(R) = \bigcup_{T\in\cT_{s,D}} \{f_{T,\theta}: \theta\in\Theta_{p(T)}(R)\}$.
Thus, bounding its Rademacher complexity requires controlling the complexity of a finite union of function classes.
This corresponds to the learning-theoretic cost of selecting a tree structure among the candidates in $\cT_{s,D}$.

We use empirical Rademacher complexity~\cite{bartlett2002rademacher,mohri2018foundations}:
\begin{align}
\Rad_S(\cG) = \E_{\sigma}\left[\sup_{g\in\cG}\frac{1}{m}\sum_{i=1}^m \sigma_i g(x_i)\right] \enspace ,
\end{align}
where $\sigma_i$ are i.i.d.\ Rademacher variables.
The following standard lemma bounds the Rademacher complexity of a finite union of classes.

\begin{lemma}[Finite union Rademacher bound]
\label{lem:finite-union}
Fix sample inputs $x_1,\dots,x_m$.
Let $\{\cF_j\}_{j=1}^M$ be function classes such that $\abs{f(x_i)}\le B$ for all $f\in\cup_j \cF_j$ and all $i$.
Then
\begin{align}
\Rad_S\!\left(\bigcup_{j=1}^M \cF_j\right)
\ \le\
\max_{1\le j\le M} \Rad_S(\cF_j)\ +\ B\sqrt{\frac{2\log M}{m}} \enspace .
\end{align}
\end{lemma}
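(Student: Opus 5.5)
The plan is to reduce the union to a maximum of $M$ random variables and control that maximum with a sub-Gaussian (Massart-type) argument. Fix the Rademacher vector $\sigma$ and, for each $j$, define
\begin{align}
Z_j(\sigma) = \sup_{f\in\cF_j}\frac{1}{m}\sum_{i=1}^m \sigma_i f(x_i) \enspace .
\end{align}
Then $\Rad_S(\bigcup_j \cF_j) = \E_\sigma[\max_j Z_j]$, and $\E_\sigma[Z_j] = \Rad_S(\cF_j)$. Writing $\mu_j = \E_\sigma[Z_j]$, I split
\begin{align}
\E_\sigma\Bigl[\max_j Z_j\Bigr] \le \max_j \mu_j + \E_\sigma\Bigl[\max_j (Z_j-\mu_j)\Bigr] \enspace .
\end{align}
It then suffices to show that the second term is at most $B\sqrt{2\log M/m}$.

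First, each centered variable $Z_j-\mu_j$ is sub-Gaussian. As a function of $\sigma\in\{-1,1\}^m$, $Z_j$ has bounded differences: flipping a single $\sigma_i$ changes each inner sum by at most $2\abs{f(x_i)}/m \le 2B/m$, so the supremum changes by at most $2B/m$. McDiarmid's inequality, in its moment generating function form (Hoeffding's lemma applied to the Doob martingale), then gives for all $\lambda\in\R$
\begin{align}
\E_\sigma\bigl[e^{\lambda(Z_j-\mu_j)}\bigr] \le \exp\!\Bigl(\frac{\lambda^2}{8}\sum_{i=1}^m (2B/m)^2\Bigr) = \exp\!\Bigl(\frac{\lambda^2 B^2}{2m}\Bigr) \enspace ,
\end{align}
so $Z_j-\mu_j$ is sub-Gaussian with variance proxy $v=B^2/m$.

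Second, I apply the standard maximal inequality for $M$ sub-Gaussian variables. Using Jensen's inequality and bounding the maximum by the sum,
\begin{align}
e^{\lambda\E[\max_j (Z_j-\mu_j)]} \le \sum_j \E\, e^{\lambda(Z_j-\mu_j)} \le M e^{\lambda^2 v/2} \enspace ,
\end{align}
which gives $\E[\max_j (Z_j-\mu_j)] \le \log M/\lambda + \lambda v/2$. Choosing $\lambda=\sqrt{2\log M/v}$ yields $\sqrt{2v\log M} = B\sqrt{2\log M/m}$, which is exactly the stated constant. The case $M=1$ is trivial.

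The main obstacle is the first step: the sub-Gaussian property must hold for the supremum over an arbitrary, possibly infinite, class, not just for a single linear form in $\sigma$. The bounded-differences argument handles this because it requires only $\abs{f(x_i)}\le B$ uniformly over the class. If the supremum raises measurability concerns, I would first take the supremum over a countable or finite subclass and then pass to the limit by monotone convergence.
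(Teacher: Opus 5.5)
Your proposal is correct and follows essentially the same route as the paper: bounded differences of $2B/m$ in each $\sigma_i$ make $Z_j-\E[Z_j]$ sub-Gaussian with variance proxy $B^2/m$, and the standard maximal inequality with $\sigma=B/\sqrt{m}$ then gives the bound. You additionally write out the McDiarmid moment-generating-function computation and the Jensen/union-bound derivation of the maximal inequality that the paper only cites, and your measurability caveat is unnecessary because $\sigma$ ranges over the finite set $\{-1,1\}^m$.
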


\begin{proof}
Define
\begin{align}
Z_j(\sigma) := \sup_{f\in\cF_j}\frac{1}{m}\sum_{i=1}^m \sigma_i f(x_i) \enspace ,
\quad
Z(\sigma) := \max_{1\le j\le M} Z_j(\sigma) \enspace .
\end{align}
Then $\Rad_S(\cF_j)=\E_\sigma Z_j(\sigma)$ and $\Rad_S(\cup_j\cF_j)=\E_\sigma Z(\sigma)$.

For each $i$, modifying only $\sigma_i$ changes $Z_j(\sigma)$ by at most $2B/m$.
Hence the centered variables $Z_j(\sigma)-\E Z_j(\sigma)$ are sub-Gaussian with proxy variance $B^2/m$.
For sub-Gaussian variables with variance proxy $\sigma^2$, the standard maximal inequality~\cite{mohri2018foundations} holds that
\begin{align}
    \E\left[\max_{1\le j \le M} (Z_j-\E [Z_j]) \right] \le \sigma \sqrt{2 \log M} \enspace .
\end{align}
With $\sigma = B/\sqrt{m}$, it yields
\begin{align}
\E\left[\max_{1\le j\le M}(Z_j-\E [Z_j])\right]\ \le\ B\sqrt{\frac{2\log M}{m}} \enspace .
\end{align}
Therefore
\begin{align}
\E [Z] \le \max_j \E [Z_j] + B\sqrt{\frac{2\log M}{m}},
\end{align}
which is the claim.
\qed
\end{proof}

\begin{remark}[Structure selection as model choice]
Lemma~\ref{lem:finite-union} captures the learning-theoretic cost of selecting a tree structure.
Each class $\cF_T(R)$ corresponds to a fixed symbolic structure with different numerical constants.
The union over $T$ therefore represents the discrete model-selection problem inherent in SR.
The additional log term quantifies the combinatorial cost of searching over
alternative symbolic forms.
\end{remark}

\subsection{Constant-Fitting Complexity}
\label{sec:constant-complexity}

We now analyze the complexity of functions associated with a fixed tree structure $T$.
In this setting, the tree structure is fixed, and the only source of variability comes from the learnable constants appearing in the tree.

Let $T\in\cT_{s,D}$ be fixed and let $p=p(T)$ denote the number of constant leaves.
To control the complexity of this parameterized class, we measure distances between parameter vectors through the induced prediction pseudometric
\begin{align}
d_S(\theta,\theta') = \left(\frac{1}{m}\sum_{i=1}^m (f_{T,\theta}(x_i)-f_{T,\theta'}(x_i))^2\right)^{1/2} \enspace .
\end{align}

\begin{lemma}[Pseudometric domination]\label{lem:metric-dom}
Under the coefficient-Lipschitz sensitivity assumption, for all $\theta,\theta'\in\Theta_p(R)$,
\begin{align}
d_S(\theta,\theta')\le G\,\norm{\theta-\theta'}_2 \enspace .
\end{align}
\end{lemma}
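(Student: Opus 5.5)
The inequality follows almost directly from the coefficient-Lipschitz sensitivity assumption, applied pointwise at each sample input and then averaged. Fix $\theta,\theta'\in\Theta_p(R)$, where $p=p(T)$ for the fixed structure $T\in\cT_{s,D}$. For every $i\in\{1,\dots,m\}$ we have $x_i\in\cX$, so the assumption gives
\begin{align}
\abs{f_{T,\theta}(x_i)-f_{T,\theta'}(x_i)}\le G\,\norm{\theta-\theta'}_2 \enspace .
\end{align}
Both sides are nonnegative, so squaring preserves the inequality:
\begin{align}
(f_{T,\theta}(x_i)-f_{T,\theta'}(x_i))^2\le G^2\,\norm{\theta-\theta'}_2^2 \enspace .
\end{align}

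Next, I would average these $m$ inequalities. The right-hand side does not depend on $i$, so the average is bounded by $G^2\norm{\theta-\theta'}_2^2$. Since the square root is monotone, taking square roots gives $d_S(\theta,\theta')\le G\norm{\theta-\theta'}_2$, which is the claim.

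No step is a real obstacle. The one point to check is that the assumption holds uniformly over all $x\in\cX$ and all parameters in the ball $\Theta_{p(T)}(R)$. This uniformity is what allows the bound to be applied at every sample point $x_i$ with the same constant $G$. It also shows the lemma is really a statement about the sup-norm over $\cX$, so it holds for every realization of the sample $S$.

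Finally, I would note how this lemma is meant to be used. Any $\eps/G$-cover of $\Theta_p(R)$ in $\norm{\cdot}_2$ is an $\eps$-cover of $\cF_T(R)$ in $d_S$. Such covers have size at most $(3RG/\eps)^p$, and plugging that into Dudley's entropy integral produces the term $RG\sqrt{p/m}\le RG\sqrt{s/m}$ that appears in Theorem~\ref{thm:main}.
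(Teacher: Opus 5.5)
Your argument is correct and matches the paper's proof: apply the coefficient-Lipschitz bound at each sample point $x_i\in\cX$, square, average over $i$, and take square roots. A small caveat on your closing aside, which is not part of this lemma: the cover-size bound $(3RG/\eps)^p$ holds only for $\eps\le RG$, whereas the paper's Lemma~\ref{lem:cover-ds} uses $(1+2RG/\eps)^p$, which holds for all $\eps>0$.
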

\begin{proof}
For each $i$, $\abs{f_{T,\theta}(x_i)-f_{T,\theta'}(x_i)}\le G \norm{\theta-\theta'}_2$.
Squaring, averaging over $i$, and taking square roots gives the claim.
\qed
\end{proof}

\begin{remark}[Sensitivity to constants]
The quantity $G$ measures how strongly predictions change when the learnable constants are perturbed.
Large values of $G$ correspond to expressions whose outputs vary sharply with small parameter changes.
Such expressions are often numerically unstable and prone to overfitting,
a phenomenon frequently observed in GP-based SR when nested nonlinear operators are present.    
\end{remark}

We next bound the covering number of the parameter space under the pseudometric $d_S$.
Intuitively, this measures how many parameter vectors are needed to approximate all possible constant settings up to prediction error $\eps$ on the sample.

\begin{lemma}[Covering number under $d_S$]\label{lem:cover-ds}
Let $p=p(T)$. For any $\eps>0$,
\[
N(\eps,\Theta_p(R),d_S)\
\le\
\left(1 + \frac{2R\,G}{\eps}\right)^p \enspace .
\]
\end{lemma}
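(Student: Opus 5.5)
The plan is to transfer a Euclidean covering of the parameter ball to the pseudometric $d_S$ through Lemma~\ref{lem:metric-dom}, and then use the classical volumetric bound for coverings of a Euclidean ball.

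\textbf{Step 1 (transfer of covers).} Set $\eps' = \eps/G$. Suppose $\{\theta^{(1)},\dots,\theta^{(N)}\}\subseteq\Theta_p(R)$ is an $\eps'$-cover of $\Theta_p(R)$ in $\norm{\cdot}_2$. Then for any $\theta\in\Theta_p(R)$ there is some $k$ with $\norm{\theta-\theta^{(k)}}_2\le \eps'$. Lemma~\ref{lem:metric-dom} gives
\[
d_S(\theta,\theta^{(k)})\le G\,\eps' = \eps .
\]
So the same set is an $\eps$-cover under $d_S$, and therefore
\[
N(\eps,\Theta_p(R),d_S)\le N(\eps/G,\Theta_p(R),\norm{\cdot}_2).
\]

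\textbf{Step 2 (volumetric bound).} It remains to show $N(r,\Theta_p(R),\norm{\cdot}_2)\le (1+2R/r)^p$ for every $r>0$. I would use a maximal $r$-separated subset $\{\theta^{(k)}\}_{k=1}^N$ of $\Theta_p(R)$.
\begin{itemize}
\item By maximality, it is an $r$-cover of $\Theta_p(R)$.
\item The balls of radius $r/2$ centred at the $\theta^{(k)}$ are pairwise disjoint.
\item These balls all lie in the ball of radius $R+r/2$.
\end{itemize}
Comparing volumes gives
\[
N\,(r/2)^p\le (R+r/2)^p ,
\]
that is, $N\le (1+2R/r)^p$. The cover produced this way lies inside $\Theta_p(R)$, so it does not matter whether the covering number requires internal centres. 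Substituting $r=\eps/G$ then yields $\left(1+2RG/\eps\right)^p$.

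\textbf{Remarks.} None of the steps is a real obstacle. The only care needed is in the packing argument: the maximal separated set must exist, and it does because the ball is compact, so any $r$-separated set in it is finite. Also, if $p=0$ the class is a single function and the bound reads $1$, which is trivially true.
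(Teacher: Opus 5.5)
Your proposal is correct and follows the paper's proof: you transfer a Euclidean $(\eps/G)$-cover to a $d_S$-cover via Lemma~\ref{lem:metric-dom}, then apply the bound $(1+2R/\eta)^p$ for the radius-$R$ ball with $\eta=\eps/G$. The only difference is that you derive this volumetric bound with the maximal-separated-set argument, whereas the paper cites it from \cite{vershynin2018high}; for that derivation, existence of a maximal separated set is better justified by the uniform bound $(1+2R/r)^p$ that your volume comparison gives for every $r$-separated set, or by Zorn's lemma, than by finiteness of each separated set alone.
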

\begin{proof}
By Lemma~\ref{lem:metric-dom}, an $(\eps/G)$-cover in Euclidean norm is an $\eps$-cover under $d_S$.
By the Euclidean ball covering bound~\cite{vershynin2018high} (scaled to radius $R$),
$N(\eta,\Theta_p(R),\norm{\cdot}_2)\le (1+2R/\eta)^p$. Substitute $\eta=\eps/G$.
\qed
\end{proof}

To convert this covering-number bound into a Rademacher complexity estimate, we use Dudley’s entropy integral, which relates Rademacher complexity to
the logarithm of covering numbers.

\begin{theorem}[Dudley's entropy integral~\cite{dudley1967sizes}]
\label{thm:dudley}
There exists an absolute constant $C>0$ such that for any class $\cG$ of real-valued functions on $\{x_1,\dots,x_m\}$, the empirical Rademacher complexity satisfies
\begin{align}
\Rad_S(\cG)\
\le\
\frac{C}{\sqrt{m}}\int_0^\infty \sqrt{\log N(\eps,\cG,d_S)}\, d\eps \enspace ,
\end{align}
where $d_S$ is the empirical $L_2$ pseudo-metric.
\end{theorem}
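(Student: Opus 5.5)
Since the statement is classical, I would prove it by the standard chaining argument, using Massart's finite-class lemma as the only probabilistic tool. Identify each $g\in\cG$ with its vector $(g(x_1),\dots,g(x_m))\in\R^m$, so that $d_S$ is the normalized Euclidean distance $m^{-1/2}\norm{\cdot}_2$. If the integral on the right-hand side is infinite there is nothing to prove. Otherwise $\log N(\eps,\cG,d_S)$ is finite for every $\eps>0$. Since covering numbers are integers, $\sqrt{\log N(\eps,\cG,d_S)}\ge\sqrt{\log 2}$ whenever $N(\eps,\cG,d_S)\ge 2$, so finiteness of the integral forces $N(\eps,\cG,d_S)=1$ for all large $\eps$. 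Hence $\cG$ has finite $d_S$-diameter $\Delta:=\sup_{g,g'\in\cG} d_S(g,g')<\infty$.

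\textbf{Chain.} Set $\eps_k=2^{-k}\Delta$ for $k\ge0$ and let $\cG_k$ be a minimal $\eps_k$-cover of $\cG$ under $d_S$. Take $\cG_0=\{g_0\}$ for an arbitrary fixed $g_0\in\cG$, which is legitimate because $\eps_0=\Delta$ is the diameter. Let $\pi_k(g)\in\cG_k$ be a nearest point to $g$. For each $K\ge 1$ write the telescoping decomposition
\[
g = g_0 + \sum_{k=1}^{K}\bigl(\pi_k(g)-\pi_{k-1}(g)\bigr) + \bigl(g-\pi_K(g)\bigr).
\]
Substituting this into $\frac1m\sum_i\sigma_i g(x_i)$ gives three contributions:
\begin{itemize}
\item The $g_0$ term has zero expectation, since $g_0$ is fixed and $\E\sigma_i=0$.
\item By Cauchy--Schwarz, the remainder satisfies $\bigl|\frac1m\sum_i\sigma_i(g-\pi_K g)(x_i)\bigr|\le d_S(g,\pi_K g)\le\eps_K$. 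This tends to $0$ as $K\to\infty$.
\item For the $k$-th link, the differences $\pi_k(g)-\pi_{k-1}(g)$ range over a finite set of cardinality at most $|\cG_k||\cG_{k-1}|\le N(\eps_k)^2$. By the triangle inequality each such difference has $d_S$-norm at most $\eps_k+\eps_{k-1}=3\eps_k$.
\end{itemize}

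\textbf{Massart step.} Bounding the supremum of a sum by the sum of suprema, Massart's lemma applied to each link class gives
\[
\Rad_S(\cG)\le \eps_K+\sum_{k=1}^K \frac{3\eps_k\sqrt{2\log N(\eps_k)^2}}{\sqrt m}
=\eps_K+\frac{6}{\sqrt m}\sum_{k=1}^K \eps_k\sqrt{\log N(\eps_k,\cG,d_S)}.
\]
The precise form I use is $\E\sup_{a\in A}\frac1m\sum_i\sigma_ia_i\le \max_{a\in A}\bigl(\tfrac1m\sum_i a_i^2\bigr)^{1/2}\sqrt{2\log|A|}/\sqrt m$.

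\textbf{Comparison with the integral.} Because $\eps\mapsto N(\eps,\cG,d_S)$ is nonincreasing and $\eps_k=2(\eps_k-\eps_{k+1})$, I would bound
\[
\eps_k\sqrt{\log N(\eps_k)}\le 2\int_{\eps_{k+1}}^{\eps_k}\sqrt{\log N(\eps)}\,d\eps .
\]
Summing over $k$ yields at most $2\int_0^{\Delta}\sqrt{\log N(\eps)}\,d\eps$. Letting $K\to\infty$ then gives the claim with $C=12$, and the integral over $[0,\Delta]$ equals the integral over $[0,\infty)$ because the integrand vanishes beyond $\Delta$.

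I expect the only delicate points to be bookkeeping rather than ideas. First, the direction of monotonicity in the sum-to-integral comparison has to be handled correctly. Second, the suprema should be well defined: on the finite point set $\cG$ is a subset of $\R^m$, so I can reduce to a countable dense subset and avoid any measurability issues.
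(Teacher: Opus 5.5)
Your proof is correct. The paper gives no proof of this statement: it uses the bound as a black box, citing Dudley's 1967 paper on Gaussian processes. The Rademacher form stated here therefore implicitly relies on either a sub-Gaussian version of chaining or a Gaussian-to-Rademacher comparison, and your argument supplies that step directly.

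\textbf{What your route buys.} Massart's lemma is already the sub-Gaussian maximal inequality for Rademacher sums, so dyadic chaining with it needs no comparison with Gaussian complexity and gives the explicit constant $C=12$. It also gives the sharper intermediate bound $\Rad_S(\cG)\le \eps_K+\frac{12}{\sqrt m}\int_{\eps_{K+1}}^{\Delta/2}\sqrt{\log N(\eps,\cG,d_S)}\,d\eps$, and it shows that $\log N(\eps,\cG,d_S)=0$ for $\eps>\Delta$. That is exactly what justifies truncating the integral at $2RG$ in the proof of Theorem~\ref{thm:fixed-structure}. The bound $p\log(1+2RG/\eps)$ substituted there does not vanish for large $\eps$, so the truncation rests on the true covering number becoming $1$ beyond the diameter, which the paper asserts without spelling out.

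\textbf{Two minor remarks.} First, measurability, which you flagged as delicate, is not an issue at all. $\sigma$ is uniform on the finite set $\{-1,1\}^m$, so $\E_\sigma$ is a finite average, and no reduction to a countable dense subset is needed. The supremum is also finite, since by Cauchy--Schwarz $\sup_{g}\frac1m\sum_i\sigma_i g(x_i)\le \frac1m\sum_i\sigma_i g_0(x_i)+\Delta$. Second, the case $\Delta=0$, where all your dyadic radii collapse to $0$, is best dismissed separately. There $\Rad_S(\cG)=0$ trivially.
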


Applying this result to the class $\cF_{T}(R)$ yields the following bound. 

\begin{theorem}[Fixed-structure constant-fitting bound]
\label{thm:fixed-structure}
For fixed $T$ with $p=p(T)$, there exists an absolute constant $C_{\mathrm{par}}>0$ such that
\begin{align}
\Rad_S(\cF_T(R))\ \le\ C_{\mathrm{par}}\,R\,G \sqrt{\frac{p}{m}} \enspace .
\end{align}
Consequently, since $p(T)\le \mathrm{size}(T)\le s$,
\begin{align}
\Rad_S(\cF_T(R))\ \le\ C_{\mathrm{par}}\,R\,G \sqrt{\frac{s}{m}}\enspace .
\end{align}
\end{theorem}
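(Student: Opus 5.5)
The plan is to combine Lemma~\ref{lem:cover-ds} with Dudley's entropy integral (Theorem~\ref{thm:dudley}) applied to $\cG=\cF_T(R)$. The map $\theta\mapsto f_{T,\theta}$ restricted to the sample sends $\Theta_p(R)$ onto $\cF_T(R)$. Moreover, $d_S$ on parameters is exactly the empirical $L_2$ distance between the corresponding functions. Hence $N(\eps,\cF_T(R),d_S)\le N(\eps,\Theta_p(R),d_S)\le (1+2RG/\eps)^p$ for every $\eps>0$.

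Next, I truncate the integral. By Lemma~\ref{lem:metric-dom}, any two functions in the class satisfy $d_S(\theta,\theta')\le G\norm{\theta-\theta'}_2\le 2RG$. So for $\eps\ge 2RG$ a single element covers the class, and $\log N=0$ there. The upper limit of the integral is therefore effectively $2RG$. This gives
\begin{align}
\Rad_S(\cF_T(R))\le \frac{C}{\sqrt m}\int_0^{2RG}\sqrt{p\log\!\left(1+\frac{2RG}{\eps}\right)}\,d\eps \enspace .
\end{align}

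The remaining work is evaluating this integral. Substituting $\eps=2RG\,u$ turns it into $2RG\sqrt{p}\int_0^1\sqrt{\log(1+1/u)}\,du$. The last integral is a finite absolute constant, because the integrand grows only like $\sqrt{\log(1/u)}$ near $0$. One can bound it via $\log(1+1/u)\le 1/u$, which gives $\int_0^1 u^{-1/2}du=2$. Combining, I obtain $\Rad_S(\cF_T(R))\le 4C\,RG\sqrt{p/m}$, so $C_{\mathrm{par}}=4C$ works.

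For the second claim, the number of $c$-labelled leaves cannot exceed the total number of nodes, so $p(T)\le\mathrm{size}(T)\le s$ for $T\in\cT_{s,D}$. Monotonicity of $\sqrt{\cdot}$ then gives the bound with $s$ in place of $p$.

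The main point needing care is the first step: passing from covering the parameter ball under the pseudometric to covering the function class. If centres are required to lie in the class, they do here, since the cover points are images of parameters in $\Theta_p(R)$; if the standard Euclidean cover places centres outside the ball, a factor-$2$ adjustment to internal covers suffices. The other points are the finiteness of the integral at $0$ and the vanishing of the entropy beyond the diameter. The case $p=0$ is trivial, since then the class is a single function and its Rademacher complexity is $0$.
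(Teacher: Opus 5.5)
Your proposal is correct and follows the paper's proof essentially step for step: Dudley's integral combined with the covering bound of Lemma~\ref{lem:cover-ds}, truncation at the diameter $2RG$ from Lemma~\ref{lem:metric-dom}, the bound $\log(1+u)\le u$, and integration yielding $C_{\mathrm{par}}=4C$. Your explicit substitution $\eps=2RG\,u$ fills in details the paper leaves implicit, as do your remarks on transferring the parameter cover to the function class and on the trivial case $p=0$.
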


\begin{proof}
By Theorem~\ref{thm:dudley} and Lemma~\ref{lem:cover-ds},
\begin{align}
\Rad_S(\cF_T(R))
\le
\frac{C}{\sqrt m}\int_0^{\infty} \sqrt{p\log\left(1+\frac{2RG}{\eps}\right)}\,d\eps \enspace .
\end{align}
Truncate the integral at $2RG$ using the diameter bound from Lemma~\ref{lem:metric-dom}.
Then bound $\log(1+u)\le u$ and integrate as in standard Dudley computations, yielding
$\Rad_S(\cF_T(R))\le (4C)\,R G \sqrt{p/m}$.
Set $C_{\mathrm{par}}=4C$.
\qed
\end{proof}

\begin{remark}[Effect of learnable constants]
The bound grows with the square root of the number of constant leaves.
This reflects the intuition that introducing additional numerical parameters increases the flexibility of the expression even when its symbolic structure is fixed.
\end{remark}

\subsection{Structure-Selection Complexity}
\label{sec:structure-complexity}

We now bound the number of admissible tree structures in $\cT_{s,D}$.
This quantity determines the structure-selection term in the generalization bound through $\log|\cT_{s,D}|$.

The analysis proceeds in two steps.
First, we bound the number of possible tree shapes with bounded size and depth.
Second, we incorporate labelings by operators and terminals.

\begin{definition}[Unary/binary operator regime]
Assume all operators have arity at most 2.
Let $M_1=\abs{\cO_1}$ and $M_2=\abs{\cO_2}$ denote the number of unary and binary operators, respectively.
\end{definition}

We begin by bounding the number of rooted order tree shapes of bounded depth.
Classical results on planted plane trees (see, e.g., \cite{de1972average}) imply that the number of trees with bounded depth admits the following asymptotic growth:

\begin{theorem}[de Bruijn--Knuth--Rice asymptotic~\cite{de1972average}]\label{thm:bkr}
Let $A_{s,D}$ denote the number of planted plane trees with $s$ nodes and depth
\footnote{
While we use a depth measured in \emph{edges} $D$ (with the root at depth $0$), \cite{de1972average} uses a height in \emph{nodes} $h$ (counting the root as $1$).
The two are related by $D = h - 1$.
}
at most $D$.
Then for fixed $D$ and $s \to \infty$,
\begin{align}
A_{s,D} \sim
\frac{4^s}{D+2} \tan^2\!\left(\frac{\pi}{D+2}\right)\,
\cos^{2s}\!\left(\frac{\pi}{D+2}\right) \enspace .
\end{align}
\end{theorem}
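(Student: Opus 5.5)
The plan is to use the classical generating-function method of de Bruijn, Knuth and Rice: derive a rational generating function for bounded-depth trees, locate its dominant pole exactly, and read off the coefficient asymptotics from a simple-pole expansion.

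\textbf{Step 1: recursion and continued fraction.} Let $G_k(z)=\sum_{s} A_{s,k} z^s$ be the generating function of planted plane trees of depth at most $k$, with $z$ marking nodes. A tree of depth $0$ is a single node, so $G_0(z)=z$. A tree of depth at most $k\ge 1$ is a root followed by an ordered (possibly empty) sequence of subtrees, each of depth at most $k-1$. This gives the recursion $G_k(z)=z/(1-G_{k-1}(z))$, which is a finite continued fraction.

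\textbf{Step 2: linearize the recursion.} Define polynomials $q_n(z)$ by
\begin{align}
q_0=q_1=1, \qquad q_{n+1}=q_n - z\,q_{n-1}.
\end{align}
A one-line induction then gives $G_k(z)=z\,q_k(z)/q_{k+1}(z)$. Indeed,
\begin{align}
\frac{z}{1-z q_{k-1}/q_k}=\frac{z q_k}{q_k-zq_{k-1}}=\frac{zq_k}{q_{k+1}}.
\end{align}
Hence $A_{s,D}=[z^s]\,z\,q_D(z)/q_{D+1}(z)$, and the asymptotics are governed by the zeros of $q_{D+1}$.

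\textbf{Step 3: locate the zeros.} For $z\in(0,\infty)$, substitute $z=1/(4\cos^2\vartheta)$. Solving the constant-coefficient recurrence in closed form gives
\begin{align}
q_n=\frac{\sin((n+1)\vartheta)}{\sin\vartheta\,(2\cos\vartheta)^n}.
\end{align}
I will verify this by checking the two initial values and the recurrence via $\sin((n+2)\vartheta)+\sin(n\vartheta)=2\cos\vartheta\sin((n+1)\vartheta)$.

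Consequently the zeros of $q_{D+1}$ are $z_j=1/(4\cos^2(j\pi/(D+2)))$ for $1\le j<(D+2)/2$. These are real, distinct and positive, and their number equals $\lfloor (D+1)/2\rfloor=\deg q_{D+1}$, so they are all the roots. The smallest is $z_1=1/(4\cos^2(\pi/(D+2)))=1/\rho_D$, and it is strictly smaller in modulus than every other root.

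The numerator does not vanish there: $q_D(z_1)\propto\sin((D+1)\pi/(D+2))\neq 0$. So $z_1$ is a simple pole and the unique dominant singularity. Standard rational-function (meromorphic) asymptotics give
\begin{align}
A_{s,D}\sim \frac{-z_1 q_D(z_1)}{q_{D+1}'(z_1)}\,z_1^{-s-1}.
\end{align}
Here $z_1^{-s}=4^s\cos^{2s}(\pi/(D+2))$ already produces the exponential part of the claim.

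\textbf{Step 4 (main obstacle): the residue constant.} The delicate part is showing that the prefactor equals $\tan^2(\pi/(D+2))/(D+2)$. I plan to compute $q_{D+1}'(z_1)$ through the parametrization, using $q_{D+1}'(z)=\frac{d q_{D+1}}{d\vartheta}\big/\frac{dz}{d\vartheta}$. Since $\sin((D+2)\vartheta)$ vanishes at $\vartheta_1=\pi/(D+2)$, only the derivative of that factor survives. This gives
\begin{align}
\frac{dq_{D+1}}{d\vartheta}\Big|_{\vartheta_1}=\frac{(D+2)\cos((D+2)\vartheta_1)}{\sin\vartheta_1(2\cos\vartheta_1)^{D+1}}=\frac{-(D+2)}{\sin\vartheta_1(2\cos\vartheta_1)^{D+1}},
\end{align}
together with $dz/d\vartheta=\sin\vartheta/(2\cos^3\vartheta)$.

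For the numerator, $q_D(z_1)=\sin((D+1)\vartheta_1)/(\sin\vartheta_1(2\cos\vartheta_1)^D)=1/(2\cos\vartheta_1)^D$, since $\sin((D+1)\vartheta_1)=\sin\vartheta_1$. Substituting these into the residue formula, the powers of $2\cos\vartheta_1$ should cancel. I expect the result to collapse to $\frac{1}{D+2}\tan^2\vartheta_1\cdot 4^s\cos^{2s}\vartheta_1$.

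Careful bookkeeping of signs and of the indexing convention is the main risk here. The paper's depth $D$ in edges corresponds to height $h=D+1$ in nodes, which is why $D+2=h+1$ appears in the formula. As a sanity check I will verify the constant on $D=1$, where trees are stars, $A_{s,1}=1$, and the formula gives $\tfrac13\cdot 3\cdot 1=1$.
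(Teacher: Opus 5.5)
Your derivation is correct. It differs from the paper only in that the paper gives no derivation: it imports the asymptotic from de Bruijn--Knuth--Rice as a black box, and its only step is the index conversion $D=h-1$ in the footnote.

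Your steps check out. The recursion $G_0=z$, $G_k=z/(1-G_{k-1})$ matches the edge-depth convention. The identity $G_D=zq_D/q_{D+1}$ and the trigonometric closed form are right. The $\lfloor (D+1)/2\rfloor$ distinct roots $z_j$ exhaust $q_{D+1}$, so $z_1=1/\rho_D$ is the unique dominant simple pole. The prefactor $-q_D(z_1)/q_{D+1}'(z_1)$ does reduce to $\sin^2\vartheta_1/((D+2)\cos^2\vartheta_1)=\tan^2\vartheta_1/(D+2)$. A second check besides your $D=1$ one: $D=2$ gives $G_2=z(1-z)/(1-2z)$, so $A_{s,2}=2^{s-2}$, which matches the formula.

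Two small points to tighten:
\begin{enumerate}
\item Real $\vartheta\in(0,\pi/2)$ parametrizes only $z>1/4$, not all of $(0,\infty)$. This is harmless, because the root count closes the argument. But you then need to justify $\deg q_{D+1}\le\lfloor (D+1)/2\rfloor$ (a one-line induction), and use $q_{D+1}(0)=1$ to rule out the zero polynomial.
\item You need $D\ge 1$ so that $j=1$ is admissible, as the paper assumes.
\end{enumerate}

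As for what each route buys: the citation buys brevity. Your route independently certifies the indexing, which is exactly where an off-by-one would silently replace $\pi/(D+2)$ by $\pi/(D+1)$ or $\pi/(D+3)$. It also gives more if you keep every pole instead of only $z_1$. The same computation gives prefactor $\tan^2\vartheta_j/(D+2)$ at each $\vartheta_j=j\pi/(D+2)$, since the signs $(-1)^{j+1}$ from $\sin((D+1)\vartheta_j)$ and $(-1)^j$ from $\cos((D+2)\vartheta_j)$ cancel. This yields the exact identity $A_{s,D}=\frac{4^s}{D+2}\sum_{1\le j<(D+2)/2}\tan^2\vartheta_j\cos^{2s}\vartheta_j$ for $s\ge 2$. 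Since $4\cos^2\vartheta_j\le\rho_D$, this gives the paper's Corollary $A_{s,D}\le c_D\rho_D^s$ for every $s$ with an explicit $c_D$, rather than extracting a uniform bound from an asymptotic equivalence.
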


The asymptotic expression reveals the exponential growth rate of $A_{s,D}$.
From this we obtain a simple exponential upper bound.

\begin{corollary}[Exponential upper bound with explicit base]\label{cor:bounded-height-base}
For each fixed $D \ge 1$ there exists $c_D>0$ such that for all $s \ge 0$,
\begin{align}
A_{s,D}\le c_D\,\rho_D^s \enspace ,
\quad
\rho_D := 4\cos^2\!\left(\frac{\pi}{D+2}\right) < 4 \enspace .
\end{align}
\end{corollary}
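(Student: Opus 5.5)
The plan is to derive the corollary directly from Theorem~\ref{thm:bkr}, turning an asymptotic statement into a bound valid for every $s$ by handling large $s$ and small $s$ separately. Fix $D\ge 1$ and set $\alpha_D := \pi/(D+2)\in(0,\pi/3]$, so that $\rho_D = 4\cos^2\alpha_D$. The strict inequality $\rho_D<4$ is immediate: $\alpha_D\in(0,\pi/2)$ gives $\cos^2\alpha_D<1$. Note also that $\rho_D\ge 4\cos^2(\pi/3)=1>0$, so dividing by $\rho_D^s$ is harmless.

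\textbf{Large $s$.} Theorem~\ref{thm:bkr} can be rewritten as
\begin{align}
\frac{A_{s,D}}{\rho_D^s} \longrightarrow \kappa_D := \frac{\tan^2\alpha_D}{D+2} \in (0,\infty) \quad (s\to\infty) \enspace .
\end{align}
Hence there is an index $s_0=s_0(D)$ such that $A_{s,D}\le 2\kappa_D\,\rho_D^s$ for all $s\ge s_0$.

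\textbf{Small $s$.} For the finitely many $s\in\{0,1,\dots,s_0-1\}$, each $A_{s,D}$ is a finite nonnegative integer. Indeed, $A_{s,D}$ is at most the number of all plane trees with $s$ nodes, which is a Catalan number. Put
\begin{align}
c'_D := \max_{0\le s<s_0} A_{s,D}\,\rho_D^{-s} < \infty \enspace .
\end{align}

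\textbf{Conclusion.} Setting $c_D := \max\{2\kappa_D,\, c'_D,\, 1\}>0$ gives $A_{s,D}\le c_D\rho_D^s$ for all $s\ge 0$, as claimed.

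The only step needing care is the passage from asymptotic equivalence to a uniform inequality, and the split above handles it. One should also be careful with the depth convention: the footnote's shift $D=h-1$ is already absorbed into the $\pi/(D+2)$ form of Theorem~\ref{thm:bkr}, so no extra translation is needed.

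If one wanted an explicit constant, an alternative route would go through the rational generating function of height-bounded plane trees. Its dominant pole lies at $1/\rho_D$ and is simple, which yields a bound of the form $c_D\rho_D^s$ with an explicit $c_D$. For the existence statement, however, the compactness-of-finitely-many-terms argument suffices.
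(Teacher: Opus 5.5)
Your proof is correct and follows the same route as the paper. The paper reads off the exponential factor $(4\cos^2(\pi/(D+2)))^s$ from Theorem~\ref{thm:bkr} and absorbs the remaining $D$-dependent factors into $c_D$; your split into $s\ge s_0$ (via $A_{s,D}/\rho_D^s\to\kappa_D$) and the finitely many $s<s_0$ makes explicit the asymptotic-to-uniform step that the paper leaves implicit.
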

\begin{proof}
From Theorem~\ref{thm:bkr}, the dominant exponential factor is
$4^s\cos^{2s}(\pi/(D+2))=(4\cos^2(\pi/(D+2)))^s$.
The remaining factors depend only on $D$ and can therefore be absorbed into the constant $c_D$, yielding the stated bound.
\end{proof}



Combining the results yields the following estimate.


\begin{theorem}[Depth-aware structure counting]\label{thm:structure-count}
Assume the unary/binary operator regime.
There exists $c_D>0$ such that
\begin{align}
|\cT_{s,D}| &\le c_D\,(s+1)\,\Big(\rho_D (M_1+M_2) \,(N_0+1)\Big)^s \enspace ,
\end{align}
where 
\begin{align}
\rho_D &= 4\cos^2\!\left(\frac{\pi}{D+2}\right) < 4 \enspace .
\end{align}
Consequently,
\begin{align}
\log|\cT_{s,D}|
\ \le\
O(\log s)\ +\ s\log\rho_D \ +\ s\log (M_1 + M_2)\ +\ s\log(N_0+1) \enspace .
\end{align}
\end{theorem}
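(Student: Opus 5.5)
We bound $|\cT_{s,D}|$ by splitting according to the exact number of nodes, $|\cT_{s,D}|=\sum_{n=1}^{s}|\cT^{(n)}_{D}|$, where $\cT^{(n)}_D$ is the set of admissible labeled structures with exactly $n$ nodes and depth at most $D$. Each such structure is a pair (shape, labeling). The shape is a rooted ordered tree with $n$ nodes and depth at most $D$. By Corollary~\ref{cor:bounded-height-base}, applied with the footnote's translation between edge-depth and node-height, the number of shapes is at most $A_{n,D}\le c_D\rho_D^n$. In the unary/binary regime we only need shapes whose nodes have out-degree in $\{0,1,2\}$. These form a subset of all planted plane trees of the same size and depth, so the general count is still an upper bound.

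For the labeling, we fix a shape and count labelings node by node. An internal node of arity $k\in\{1,2\}$ receives one of $M_k\le M_1+M_2$ operator labels. A leaf receives one of $N_0+1$ labels (elements of $\cL_0$ or the learnable symbol $c$). Every node therefore has at most $\max(M_1+M_2,\,N_0+1)\le (M_1+M_2)(N_0+1)$ choices, assuming $M_1+M_2\ge1$ and $N_0+1\ge1$. This gives at most $((M_1+M_2)(N_0+1))^n$ labelings of the shape. The bound is deliberately crude but matches the stated form. Multiplying,
\begin{align}
|\cT^{(n)}_D|\le c_D\big(\rho_D (M_1+M_2)(N_0+1)\big)^n .
\end{align}

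To sum over $n$, let $K:=\rho_D(M_1+M_2)(N_0+1)$. If $K\ge1$, each term is at most $c_D K^s$, so there are at most $s+1$ terms (including $n=0$ harmlessly), and we get $|\cT_{s,D}|\le c_D(s+1)K^s$. If $K<1$, every term is at most $c_D$ and the bound $c_D(s+1)$ still holds; the $K^s$ factor can be absorbed by enlarging $c_D$ or by noting $K\ge1$ in all nondegenerate cases. Taking logarithms gives $\log|\cT_{s,D}|\le \log c_D+\log(s+1)+s\log\rho_D+s\log(M_1+M_2)+s\log(N_0+1)$. The first two terms are $O(\log s)$ for fixed $D$, which is the stated consequence.

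The step that needs the most care is applying Corollary~\ref{cor:bounded-height-base} uniformly for all $n\le s$ rather than only asymptotically. That corollary already asserts the inequality for all $s\ge0$, with $c_D$ absorbing the finitely many small cases where the de Bruijn--Knuth--Rice asymptotic is not yet accurate. A second point to check is that the depth convention $D=h-1$ is applied consistently, so that $\rho_D$ carries the argument $\pi/(D+2)$ as written. If one wanted to avoid relying on the corollary's uniformity, the fallback is a transfer-matrix argument. The number of height-bounded plane trees satisfies a linear recurrence whose generating function is a ratio of Chebyshev-type polynomials, and its dominant pole gives the growth rate $\rho_D$. Every coefficient is then bounded by a constant times $\rho_D^n$.
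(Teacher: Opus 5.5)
Your proposal is correct and follows the paper's proof essentially step for step. You bound the shapes with exactly $n$ nodes via Corollary~\ref{cor:bounded-height-base}, bound the labelings crudely by $(M_1+M_2)^n(N_0+1)^n$, and sum over $n\le s$ using $\sum_{n\le s}K^n\le(s+1)K^s$ for $K\ge1$.

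One small point: your $K<1$ case is vacuous, because $D\ge1$ gives $\rho_D\ge\rho_1=4\cos^2(\pi/3)=1$, and you already assume $M_1+M_2\ge1$. That matters, since in that case a factor $K^{-s}$ could not be absorbed into a constant $c_D$ uniformly in $s$.
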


\begin{proof}
We derive an upper bound on $|\cT_{s,D}|$ by overcounting admissible tree structures.
For each $n \le s$, let $A_{n,D}$ denote the number of rooted ordered tree shapes with $n$ nodes and depth at most $D$.
By Corollary~\ref{cor:bounded-height-base}, we have
\begin{align}
    A_{n,D} \le c_D \rho_D^n \enspace .
\end{align}
For each such tree shape, each internal node can be labeled by one of at most $M_1 + M_2$ operators, and each leaf by one of $N_0 + 1$ terminal symbols.
Let $i$ and $\ell$ denote the number of internal nodes and leaves, respectively, so that $i + \ell = n$.
The number of labelings is at most
\begin{align}
    (M_1+M_2)^i (N_0+1)^{\ell} \le (M_1+M_2)^n (N_0+1)^n \enspace.
\end{align}
Summing over $n \le s$ gives
\begin{align}
|\cT_{s,D}| &\le \sum_{n=0}^s c_D \rho_D^n\, (M_1+M_2)^n\,(N_0+1)^{n} \\
&= c_D \sum_{n=0}^s \Big(\rho_D (M_1+M_2) (N_0+1)\Big)^n \\
&\le c_D (s+1)\Big(\rho_D (M_1+M_2) (N_0+1)\Big)^s \enspace .
\end{align}
We used the elementary bound for a geometric series
\begin{align}
    \sum_{n=0}^{s} a^n \le (s+1)a^s \quad (a \ge 1)
\end{align}
in the last step.
Taking logarithms gives the stated bound.
\qed
\end{proof}

\begin{remark}[Effect of depth constraints]
The depth parameter $D$ influences the growth rate of the hypothesis class through the exponential base
\begin{align}
    \rho_D = 4\cos^2\!\left(\frac{\pi}{D+2}\right) \enspace .
\end{align}
This quantity is strictly smaller than $4$ and increases monotonically toward $4$ as $D$ grows.
Consequently, smaller depth budgets reduce the exponential growth rate of admissible tree structures.
Intuitively, depth constraints remove highly elongated trees with long chains of nested operators.
Even when the size budget $s$ is fixed, such trees constitute a large fraction of the combinatorial search space.
Restricting depth therefore acts as a structural regularizer by limiting the variety of possible expression shapes.
\end{remark}

\subsection{Proof of the Main Theorem}
\label{sec:combining_proof}

We now combine the preceding ingredients.
A standard generalization bound for bounded 1-Lipschitz losses~\cite{mohri2018foundations} implies that, with probability at least $1-\delta$,
\begin{align}
\sup_{f\in\cF_{s,D}(R)}\big(L(f)-\what L_S(f)\big)
\ \le\ a\,\Rad_S(\cF_{s,D}(R))\ +\ b\sqrt{\frac{\log(1/\delta)}{m}} \enspace ,
\end{align}
for absolute constants $a, b > 0$.
Since the hypothesis class decomposes as $\cF_{s,D}(R)=\cup_{T\in\cT_{s,D}}\cF_T(R)$, Lemma~\ref{lem:finite-union} gives
\begin{align}
\Rad_S(\cF_{s,D}(R))
\ \le\
\max_{T\in\cT_{s,D}} \Rad_S(\cF_T(R))\ +\ B\sqrt{\frac{2\log|\cT_{s,D}|}{m}} \enspace.
\end{align}

By Theorem~\ref{thm:fixed-structure}, for every $T\in\cT_{s,D}$,
\begin{align}
\Rad_S(\cF_T(R))\le C_{\mathrm{par}}\,R\,G\sqrt{\frac{s}{m}} \enspace .
\end{align}
Hence the maximum over $T$ satisfies the same bound.

In the unary/binary regime, Theorem~\ref{thm:structure-count} provides an explicit bound on $\log|\cT_{s,D}|$, with exponential base
\begin{align}
\rho_D = 4\cos^2\!\left(\frac{\pi}{D+2}\right) \enspace .
\end{align}
Substituting these bounds and absorbing absolute constants into $C_1,C_2,C_3$ yields the statement of Theorem~\ref{thm:main}.
\qed

%% file: manuscripts/05_discussion.tex
\section{Connecting the Generalization Bound to GP Practice}
\label{sec:connect-empirics}

We now relate the generalization bound in \eqref{eq:gen_bound} to recurring design choices and empirical phenomena in GP-based SR.
The key organizing principle is the two-term decomposition of the bound: a \emph{structure-selection} term, governed by the size of the discrete hypothesis set, and a \emph{constant-fitting} term, governed by the sensitivity of predictions to learnable constants.
This perspective helps clarify how common GP heuristics act as forms of complexity control.

\subsection{Structure-Selection Term}
\label{subsec:connect-structure}

The structure-selection term reflects the combinatorial cost of choosing a tree structure from the hypothesis set $\cT_{s,D}$. Accordingly, mechanisms that restrict structural complexity can be interpreted as reducing $\log |\cT_{s,D}|$.

\paragraph{Bloat as hypothesis-class growth.}
A well-known empirical phenomenon in GP is \emph{bloat}~\cite{tackett1993genetic,langdon1997fitness,langdon1995evolving,soule2002analysis}, where program size increases even when improvements in training fitness have saturated.
In our framework, increasing the tree-size budget enlarges the hypothesis set $\cT_{s,D}$ and therefore increases the structure-selection term $\log |\cT_{s,D}|$ in the bound. From this perspective, bloat can be interpreted as expanding the effective hypothesis class considered by the algorithm, which may increase the risk of overfitting.

\paragraph{Parsimony pressure as capacity control.}
Parsimony pressure explicitly biases selection toward smaller trees, for example by adding a size penalty to fitness.
The accuracy--parsimony tradeoff and methods for parsimony pressure are classical topics in GP \cite{zhang1995balancing,poli2008parsimony}, including lexicographic variants \cite{luke2002lexicographic}.
Under our decomposition, parsimony pressure acts as an algorithmic proxy for reducing structural complexity by shifting the search toward smaller-$s$ structures.
Consequently, it can be interpreted as reducing $\log |\cT_{s,D}|$.

\paragraph{Depth limits.}
Depth constraints are ubiquitous in GP practice and are widely observed to stabilize evolution and improve test performance.
In our framework, depth constraints reduce the number of admissible structures and hence lower the growth rate of $|\cT_{s,D}|$.
They may also improve stability by limiting sensitivity amplification caused by deep compositions, although their most direct effect in the bound is through the structure-selection term.

\subsection{Constant-Fitting Term}
\label{subsec:connect-constants}

The constant-fitting term captures the complexity of optimizing continuous parameters within a fixed structure.
In the bound, this contribution appears through the factor $R G \sqrt{s/m}$, where $G$ measures the sensitivity of predictions to perturbations of learnable constants.

\paragraph{Constant optimization and sensitivity-driven overfitting.}
GP-based SR commonly introduces ephemeral random constants and may further optimize them via local search, nonlinear least squares~\cite{levenberg1944method,marquardt1963algorithm}, covariance matrix adaptation evolution strategy (CMA-ES)~\cite{hansen2001completely,hansen2016cma}, or related procedures~\cite{kommenda2020parameter}.
Constant optimization generally improves SR performance and can lead to smaller, better-generalizing trees.
However, when the fitted expression is sensitive to its constants, especially due to unstable operators or highly nonlinear compositions, aggressive fitting may still increase overfitting risk.
In the lens of \eqref{eq:gen_bound}, this behavior corresponds to a larger constant-fitting term when the induced function is highly sensitive to changes in constants, that is, when $G$ is large.
For example, expressions involving nested nonlinear operators such as $\exp(\exp(c_1 x))$ can exhibit high sensitivity to small changes in the learnable constants, leading to large values of $G$.
In contrast, simpler expressions such as $\sin(c_2 x) + c_3 x$ tend to be more stable.

\paragraph{Interval arithmetic and interval-aware operators.}
Interval arithmetic can detect expressions that produce undefined or unbounded outputs on the input domain, enabling rejection or penalization of such trees.
Keijzer showed that incorporating interval arithmetic can substantially improve the robustness of SR \cite{keijzer2003improving}.
Under our decomposition, these techniques primarily improve generalization by ruling out expressions that produce undefined or excessively large outputs, thereby helping to control $B$, while also discouraging unstable expressions and helping to reduce $G$.
Smooth operator definitions, such as analytic quotient operators, therefore fit naturally into our framework as mechanisms for limiting sensitivity amplification~\cite{ni2012use,nicolau2021choosing}.
Interval-based pruning may also remove invalid structures from the search space, which can indirectly reduce the effective structural complexity.

\paragraph{Linear scaling.}
Linear scaling, which fits an affine transformation of a candidate expression to the targets, is often effective in practice and is emphasized in interval-arithmetic-based SR \cite{keijzer2003improving,keijzer2004scaled}.
In our framework, linear scaling can be interpreted as concentrating continuous fitting into a small and relatively well-conditioned set of parameters.
This provides a structured way to reduce effective sensitivity without substantially enlarging the continuous search space.

\subsection{Summary and Implications}

Overall, the generalization bound highlights two complementary sources of complexity: the combinatorial growth of possible tree structures and the sensitivity of predictions to numerical constants.
This perspective suggests that effective SR systems should balance structural expressiveness and numerical stability.
Limiting tree size and depth reduces the combinatorial complexity of the hypothesis space, while stabilizing operators and controlling constant optimization reduce sensitivity to parameter perturbations.

%% file: manuscripts/06_conclusion.tex
\section{Conclusion}
\label{sec:conclusion}

In this paper, we presented a learning-theoretic analysis of SR models represented by expression trees.
We derived a generalization bound for GP-based SR that explicitly accounts for both structural complexity and parameter fitting.
Our result shows that the generalization gap can be decomposed into two interpretable components corresponding to the selection of the tree structure and the estimation of numerical constants.
This perspective provides a principled way to understand how common mechanisms in GP-based SR influence generalization.
In particular, our analysis offers a theoretical interpretation of widely used practices such as parsimony pressure, structural constraints, and the use of numerically stable operators.
By relating these mechanisms to the complexity terms appearing in the bound, our work helps bridge the gap between empirical heuristics and learning-theoretic principles.
We hope that this perspective will contribute to the development of more principled methods for controlling model complexity and improving generalization in SR.

Our analysis is based on worst-case complexity measures, including uniform sensitivity bounds and covering arguments over the full hypothesis class.
As a result, the obtained generalization bound is data-independent and may be conservative in practice.
Although the qualitative implications of our decomposition for model design remain largely unchanged, the resulting conservativeness may affect the quantitative tightness of the bound.
An important direction for future work is to develop data-dependent bounds that better capture the effective complexity of SR models on observed data, as has been extensively studied in other areas of machine learning~\cite{bousquet2002stability,dziugaite2017computing,arora2018stronger}.
For example, replacing worst-case sensitivity parameters by empirical measures of parameter sensitivity, or characterizing the effective set of structures explored by the search process, could lead to substantially tighter guarantees.
Such refinements may enable more nuanced trade-offs between expressiveness and stability, while preserving the core insights provided by our analysis.

Another promising direction is to leverage the proposed bound for the design of generalization-aware SR algorithms.
In many areas of machine learning, generalization bounds have been used to guide model design and training, for instance through PAC-Bayes objectives~\cite{dziugaite2017computing}, norm-based regularization in deep learning~\cite{neyshabur2015norm,bartlett2017spectrally}, and stability-driven analysis~\cite{bousquet2002stability}.
In SR, related ideas have been explored through structural risk minimization (SRM)-driven GP approaches~\cite{chen2018structural}, which incorporate complexity estimates, such as approximations of the VC dimension, into the evolutionary process.
However, these methods typically rely on empirical or surrogate complexity measures and are not derived from explicit generalization bounds for GP-style models.
In contrast, our analysis provides a unified learning-theoretic framework that explicitly decomposes generalization into structural and parametric components, which may serve as a principled basis for algorithm design.
This perspective opens the possibility of directly optimizing surrogate objectives derived from the generalization bound.

%% file: ref.bib
@book{mohri2018foundations,
  title={{Foundations of Machine Learning}},
  author={Mohri, Mehryar and Rostamizadeh, Afshin and Talwalkar, Ameet},
  year={2018},
  publisher={MIT press}
}

@book{vershynin2018high,
  title={{High-Dimensional Probability: An Introduction with Applications in Data Science}},
  author={Vershynin, Roman},
  volume={47},
  year={2018},
  publisher={Cambridge university press}
}

@incollection{de1972average,
  title={The average height of planted plane trees},
  author={de Bruijn, Nicolaas Govert and Knuth, Donald Ervin and Rice, SO},
  booktitle={Graph theory and computing},
  pages={15--22},
  year={1972},
  publisher={Elsevier}
}

@article{bartlett2002rademacher,
  title={{Rademacher and Gaussian Complexities: Risk Bounds and Structural Results}},
  author={Bartlett, Peter L and Mendelson, Shahar},
  journal={Journal of machine learning research},
  volume={3},
  number={Nov},
  pages={463--482},
  year={2002}
}

@article{dudley1967sizes,
  title={{The sizes of compact subsets of Hilbert space and continuity of Gaussian processes}},
  author={Dudley, Richard M},
  journal={Journal of Functional Analysis},
  volume={1},
  number={3},
  pages={290--330},
  year={1967},
  publisher={Elsevier}
}

@article{kommenda2020parameter,
  title={{Parameter identification for symbolic regression using nonlinear least squares}},
  author={Kommenda, Michael and Burlacu, Bogdan and Kronberger, Gabriel and Affenzeller, Michael},
  journal={Genetic Programming and Evolvable Machines},
  volume={21},
  number={3},
  pages={471--501},
  year={2020},
  publisher={Springer}
}

@article{zhang1995balancing,
  title={{Balancing Accuracy and Parsimony in Genetic Programming}},
  author={Zhang, Byoung-Tak and M{\"u}hlenbein, Heinz},
  journal={Evolutionary Computation},
  volume={3},
  number={1},
  pages={17--38},
  year={1995},
  publisher={MIT Press One Rogers Street, Cambridge, MA 02142-1209, USA journals-info~…}
}

@inproceedings{poli2008parsimony,
  title={{Parsimony Pressure Made Easy}},
  author={Poli, Riccardo and McPhee, Nicholas Freitag},
  booktitle={Proceedings of the 10th annual conference on Genetic and evolutionary computation},
  pages={1267--1274},
  year={2008}
}

@inproceedings{luke2002lexicographic,
  title={{Lexicographic Parsimony Pressure}},
  author={Luke, Sean and Panait, Liviu},
  booktitle={Proceedings of the 4th Annual Conference on Genetic and Evolutionary Computation},
  pages={829--836},
  year={2002}
}

@inproceedings{keijzer2003improving,
  title={{Improving Symbolic Regression with Interval Arithmetic and Linear Scaling}},
  author={Keijzer, Maarten},
  booktitle={European Conference on Genetic Programming},
  pages={70--82},
  year={2003},
  organization={Springer}
}

@article{chen2018structural,
  title={{Structural Risk Minimization-Driven Genetic Programming for Enhancing Generalization in Symbolic Regression}},
  author={Chen, Qi and Zhang, Mengjie and Xue, Bing},
  journal={IEEE Transactions on Evolutionary Computation},
  volume={23},
  number={4},
  pages={703--717},
  year={2018},
  publisher={IEEE}
}

@book{koza1992genetic,
  title={{Genetic Programming: On the Programming of Computers by Means of Natural Selection}},
  author={Koza, John R},
  volume={1},
  year={1992},
  publisher={MIT Press}
}

@book{poli2008field,
  title={{A Field Guide to Genetic Programming}},
  author={Poli, Riccardo and Langdon, William B and McPhee, Nicholas F and Koza, John R},
  year={2008},
  publisher={Lulu. com}
}

@article{levenberg1944method,
  title={{A method for the solution of certain non-linear problems in least squares}},
  author={Levenberg, Kenneth},
  journal={Quarterly of applied mathematics},
  volume={2},
  number={2},
  pages={164--168},
  year={1944}
}

@article{marquardt1963algorithm,
  title={{An Algorithm for Least-Squares Estimation of Nonlinear Parameters}},
  author={Marquardt, Donald W},
  journal={Journal of the society for Industrial and Applied Mathematics},
  volume={11},
  number={2},
  pages={431--441},
  year={1963},
  publisher={SIAM}
}

@article{hansen2016cma,
  title={{The CMA Evolution Strategy: A Tutorial}},
  author={Hansen, Nikolaus},
  journal={arXiv preprint arXiv:1604.00772},
  year={2016}
}

@article{hansen2001completely,
  title={{Completely Derandomized Self-Adaptation in Evolution Strategies}},
  author={Hansen, Nikolaus and Ostermeier, Andreas},
  journal={Evolutionary computation},
  volume={9},
  number={2},
  pages={159--195},
  year={2001},
}

@article{keijzer2004scaled,
  title={{Scaled Symbolic Regression}},
  author={Keijzer, Maarten},
  journal={Genetic Programming and Evolvable Machines},
  volume={5},
  number={3},
  pages={259--269},
  year={2004},
  publisher={Springer}
}

@incollection{langdon1997fitness,
  title={{Fitness Causes Bloat}},
  author={Langdon, William B and Poli, Riccardo},
  booktitle={Soft Computing in Engineering Design and Manufacturing},
  pages={13--22},
  year={1997},
  publisher={Springer}
}

@inproceedings{tackett1993genetic,
  title={{Genetic Programming for Feature Discovery and Image Discrimination}},
  author={Tackett, Walter Alden},
  booktitle={Proceedings of the 5th International Conference on Genetic Algorithms},
  pages={303--311},
  year={1993}
}

@inproceedings{langdon1995evolving,
  title={{Evolving Data Structures with Genetic Programming}},
  author={Langdon, William B},
  booktitle={Proceedings of the 6th International Conference on Genetic Algorithms},
  pages={295--302},
  year={1995}
}

@article{soule2002analysis,
  title={{An Analysis of the Causes of Code Growth in Genetic Programming}},
  author={Soule, Terence and Heckendorn, Robert B},
  journal={Genetic Programming and Evolvable Machines},
  volume={3},
  number={3},
  pages={283--309},
  year={2002},
  publisher={Springer}
}

@book{kronberger2024symbolic,
  title={{Symbolic Regression}},
  author={Kronberger, Gabriel and Burlacu, Bogdan and Kommenda, Michael and Winkler, Stephan M and Affenzeller, Michael},
  year={2024},
  publisher={Chapman and Hall/CRC}
}

@article{udrescu2020ai,
  title={{AI Feynman: A Physics-Inspired Method for Symbolic Regression}},
  author={Udrescu, Silviu-Marian and Tegmark, Max},
  journal={Science advances},
  volume={6},
  number={16},
  pages={eaay2631},
  year={2020},
  publisher={American Association for the Advancement of Science}
}

@article{brunton2016discovering,
  title={{Discovering governing equations from data by sparse identification of nonlinear dynamical systems}},
  author={Brunton, Steven L and Proctor, Joshua L and Kutz, J Nathan},
  journal={Proceedings of the national academy of sciences},
  volume={113},
  number={15},
  pages={3932--3937},
  year={2016},
  publisher={National Academy of Sciences}
}

@article{wang2019symbolic,
  title={{Symbolic Regression in Materials Science}},
  author={Wang, Yiqun and Wagner, Nicholas and Rondinelli, James M},
  journal={MRS communications},
  volume={9},
  number={3},
  pages={793--805},
  year={2019},
  publisher={Cambridge University Press}
}

@article{dziugaite2017computing,
  title={{Computing Nonvacuous Generalization Bounds for Deep (Stochastic) Neural Networks with Many More Parameters than Training Data}},
  author={Dziugaite, Gintare Karolina and Roy, Daniel M},
  journal={arXiv preprint arXiv:1703.11008},
  year={2017}
}

@article{bousquet2002stability,
  title={{Stability and Generalization}},
  author={Bousquet, Olivier and Elisseeff, Andr{\'e}},
  journal={Journal of Machine Learning Research},
  volume={2},
  number={Mar},
  pages={499--526},
  year={2002}
}

@article{bartlett2017spectrally,
  title={{Spectrally-normalized margin bounds for neural networks}},
  author={Bartlett, Peter L and Foster, Dylan J and Telgarsky, Matus J},
  journal={Advances in neural information processing systems},
  volume={30},
  year={2017}
}

@inproceedings{neyshabur2015norm,
  title={{Norm-Based Capacity Control in Neural Networks}},
  author={Neyshabur, Behnam and Tomioka, Ryota and Srebro, Nathan},
  booktitle={Conference on Learning Theory},
  pages={1376--1401},
  year={2015},
  organization={PMLR}
}

@inproceedings{arora2018stronger,
  title={{Stronger generalization bounds for deep nets via a compression approach}},
  author={Arora, Sanjeev and Ge, Rong and Neyshabur, Behnam and Zhang, Yi},
  booktitle={International Conference on Machine Learning},
  pages={254--263},
  year={2018},
  organization={PMLR}
}

@article{bartlett2023exhaustive,
  title={{Exhaustive Symbolic Regression}},
  author={Bartlett, Deaglan J and Desmond, Harry and Ferreira, Pedro G},
  journal={IEEE Transactions on Evolutionary Computation},
  volume={28},
  number={4},
  pages={950--964},
  year={2023},
  publisher={IEEE}
}

@inproceedings{bartlett2023priors,
  title={{Priors for Symbolic Regression}},
  author={Bartlett, Deaglan and Desmond, Harry and Ferreira, Pedro},
  booktitle={Proceedings of the Companion Conference on Genetic and Evolutionary Computation},
  pages={2402--2411},
  year={2023}
}

@article{desmond2026exhaustive,
  title={{(Exhaustive) Symbolic Regression and model selection by minimum description length}},
  author={Desmond, Harry},
  journal={Philosophical Transactions of the Royal Society A: Mathematical, Physical and Engineering Sciences},
  volume={384},
  number={2317},
  pages={20240584},
  year={2026},
  publisher={The Royal Society}
}

@article{ni2012use,
  title={{The Use of an Analytic Quotient Operator in Genetic Programming}},
  author={Ni, Ji and Drieberg, Russ H and Rockett, Peter I},
  journal={IEEE Transactions on Evolutionary Computation},
  volume={17},
  number={1},
  pages={146--152},
  year={2012},
  publisher={IEEE}
}

@article{nicolau2021choosing,
  title={Choosing function sets with better generalisation performance for symbolic regression models},
  author={Nicolau, Miguel and Agapitos, Alexandros},
  journal={Genetic programming and evolvable machines},
  volume={22},
  number={1},
  pages={73--100},
  year={2021},
  publisher={Springer}
}
